
\typeout{IJCAI--25 Instructions for Authors}


\documentclass{article}
\pdfpagewidth=8.5in
\pdfpageheight=11in

\usepackage{ijcai25}

\usepackage{times}
\usepackage{soul}
\usepackage{url}
\usepackage[hidelinks]{hyperref}
\usepackage[utf8]{inputenc}
\usepackage[small]{caption}
\usepackage{graphicx}
\usepackage{amsmath}
\usepackage{amsthm}
\usepackage{booktabs}
\usepackage{algorithm}
\usepackage[switch]{lineno}


\urlstyle{same}



\newtheorem{theorem}{Theorem}




\usepackage{amssymb}
\usepackage{booktabs}
\usepackage{bbding,utfsym}
\usepackage{subcaption}
\usepackage[english]{babel}
\usepackage{multirow}
\usepackage{tabularx}
\usepackage{adjustbox}
\usepackage{threeparttable}
\usepackage{algpseudocode}


\pdfinfo{
/TemplateVersion (IJCAI.2025.0)
}

\title{Adaptive Gradient Learning for Spiking Neural Networks by \\ Exploiting Membrane Potential Dynamics}

\author{
Jiaqiang Jiang$^1$\and
Lei Wang$^1$\and
Runhao Jiang$^2$\and
Jing Fan$^1$\And
Rui Yan$^1$\thanks{Corresponding author}\\
\affiliations
$^1$College of Computer Science and Technology, Zhejiang University of Technology, Hangzhou, China\\
$^2$College of Computer Science and Technology, Zhejiang University, Hangzhou, China\\
\emails
\{jqjiang, LeiWang23\}@zjut.edu.cn,
RhJiang@zju.edu.cn, \{fanjing, ryan\}@zjut.edu.cn}

\usepackage{bibentry}

\begin{document}
\maketitle

\begin{abstract}
Brain-inspired spiking neural networks (SNNs) are recognized as a promising avenue for achieving efficient, low-energy neuromorphic computing. Recent advancements have focused on directly training high-performance SNNs by estimating the approximate gradients of spiking activity through a continuous function with constant sharpness, known as surrogate gradient (SG) learning. However, as spikes propagate among neurons, the distribution of membrane potential dynamics (MPD) will deviate from the gradient-available interval of fixed SG, hindering SNNs from searching the optimal solution space. To maintain the stability of gradient flows, SG needs to align with evolving MPD. Here, we propose adaptive gradient learning for SNNs by exploiting MPD, namely MPD-AGL. It fully accounts for the underlying factors contributing to membrane potential shifts and establishes a dynamic association between SG and MPD at different timesteps to relax gradient estimation, which provides a new degree of freedom for SG learning. Experimental results demonstrate that our method achieves excellent performance at low latency. Moreover, it increases the proportion of neurons that fall into the gradient-available interval compared to fixed SG, effectively mitigating the gradient vanishing problem.
\end{abstract}

\section{Introduction}
As a new paradigm with biological plausibility and computational efficiency, spiking neural networks (SNNs) achieve unique sparse coding and asynchronous information processing by modeling the spike firing and temporal dynamics of biological neurons. Instead of artificial neural networks (ANNs) that work with continuous activation and multiply-and-accumulate (MAC) operations, SNNs operate with threshold firing and accumulate (AC) operations, which allow low-latency inference and low-power computation on neuromorphic hardware \cite{akopyan2015truenorth,ma2024darwin3,davies2018loihi,pei2019towards}. Nowadays, with the development of SNNs, it has exhibited high potential in many applications, such as image classification \cite{liang2024event,yang2024multi}, object detection \cite{wang2025eas,wang2025adaptive}, reinforcement learning \cite{qin2023low,qin2025grsn}, etc. Backpropagation-based learning is a favorable methodology for training high-performance SNNs \cite{huh2018gradient,dampfhoffer2023backpropagation}. Nevertheless, the discontinuous nature of spiking neurons hinders the direct application of gradient descent in SNNs. To tackle the non-differentiability of spike activity, surrogate gradient (SG) methods employ a smooth curve to distribute the gradient of output signals into a group of analog items in temporal neighbors \cite{zhang2020temporal}. Unfortunately, as spikes propagate in the spatio-temporal domain (STD), the distribution of membrane potential will shift and may not align with the gradient-available interval of fixed SG, leading to gradient vanishing or mismatch problems \cite{guo2022recdis}.

\begin{figure*}[htbp]
    \centering
    \includegraphics[width=0.94\linewidth]{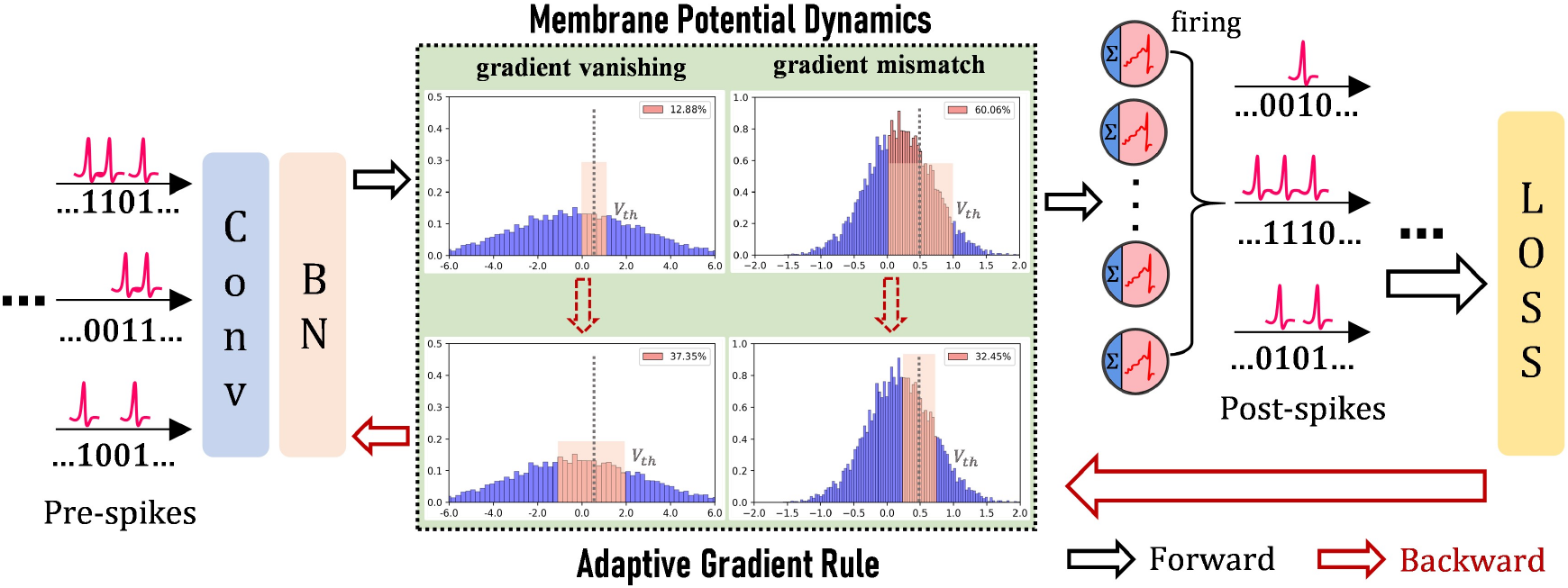}
    \caption{The overall framework of MPD-AGL. Pre-spikes are passed through the convolutional and normalization layers and then injected into spiking neurons to compute membrane potentials and fire spikes. The distribution of evolving MPD in forward propagation may not align with the fixed SG, leading to gradient vanishing or mismatch problems in backward propagation. Instead, the proposed adaptive gradient rule can synchronously adjust the width of SG to respond to evolving MPD during the entire timestep.}
    \label{Fig::overall_framework}
\end{figure*}

In Fig. \ref{Fig::overall_framework}, the main reason for gradient vanishing or mismatch problems is that the overlap area between the evolving membrane potential dynamics (MPD) and the gradient-available interval of fixed SG becomes too narrow or too wide. On the one hand, the limited overlap area causes many membrane potentials to fall into the area with zero approximate derivatives, leading to gradient propagation blockage. On the other hand, when the overlap area saturates, neurons contribute many inaccurate approximate gradients, enlarging the error with true gradients. To match SG and MPD, two groups of methods have been developed: (1) membrane potential regulation and (2) SG optimization. Membrane potential regulation methods redistribute the membrane potential before firing \cite{guo2022reducing} or define a distribution loss to rectify it \cite{guo2022recdis,wang2025potential}, aiming to balance the distribution to minimize the undesired shifts, but this increases the inference burden or requires more parameters and computations. By contrast, SG optimization methods update the SG by capturing the direction of accurate gradients that can automatically calibrate the SG sharpness in response to MPD for better gradient estimation \cite{wang2023adaptive,wang2025potential}. However, most of these methods either focus only on regulating membrane potentials or only on optimizing SG, ignoring their correlation, which cannot effectively control their alignment. Moreover, the lack of comprehensive analysis regarding the causes of membrane potential shifts leaves room for improvement in these methods.

The reason for the membrane potential shifts and how to optimize SG to align with the evolving MPD in SNN learning are our main concerns. In this work, we propose an adaptive gradient learning algorithm for SNNs by exploiting MPD. Specifically, we realized that the affine transformation in normalization layers would force the pre-synaptic input to deviate from the desired distribution, affecting the distribution of MPD, which is the main cause of membrane potential shifts. Considering the influence of affine transformation, we derive the specific distribution of MPD at different timesteps during forward propagation and accordingly design a correlation function between SG and MPD to dynamically optimize SG, capturing the evolving MPD. The overall framework of our method is illustrated in Fig. \ref{Fig::overall_framework}. In summary, the main contributions of this work can be summarized as follows:
    \begin{itemize}
        \item We provide a new perspective for understanding the membrane potential shifts in SNN forward propagation by analyzing the effect of learnable affine transformation in the normalization layers on the distribution of MPD.
        
        \item We propose an adaptive gradient rule that synchronously adjusts the gradient-available interval of SG in response to the distribution of membrane potentials at different timesteps, aligning with the evolving MPD.
        
        \item Extensive experiments on four datasets CIFAR10, CIFAR100, CIFAR10-DVS, and Tiny-ImageNet show that our method overwhelmingly outperforms existing advanced SG optimization methods. Moreover, MPD-AGL consumes only 5.2$\%$ energy of ANN for a single inference at ultra-low latency $T=2$.
    \end{itemize}

\section{Related Work}
\subsection{Direct Training of SNNs}
With the introduction of spatio-temporal backpropagation and approximate derivatives of spike activity \cite{wu2018spatio,wu2019direct}, direct training of SNNs has ushered in a new opportunity. \cite{zheng2021going} proposed threshold-dependent batch normalization to balance the input stimulus and neuronal threshold, which extended SNNs to a deeper structure. \cite{yao2023attention,lee2025spiking} incorporated the attention mechanism to estimate the saliency of different domains, helping SNNs focus on important features. \cite{fang2021incorporating,yao2022glif} developed neuronal variants to learn membrane-related parameters, expanding the expressiveness of SNNs. \cite{deng2022temporal,guo2022loss} designed loss functions to regulate the distribution of spikes and membrane potentials along the temporal dimension to more accurately align the learning gradients.

\subsection{Gradient Alignment}
An essential component of SG learning is the suitable gradient flow \cite{zenke2021remarkable}. To alleviate the problem of fixed SG not aligned with evolving MPD, \cite{guo2022reducing} designed a membrane potential rectifier to redistribute potentials closer to the spikes. \cite{guo2022recdis} introduced three regularization losses to penalize three undesired shifts of MPD. \cite{wang2025potential} quantified the inconsistency between actual distributions and targets, which was integrated into the overall network loss for joint optimization. Optimizing SG is another appealing approach. \cite{guo2022loss} approximated the gradient of spike activity by a differentiable asymptotic function evolving continuously, bridging the gap between pseudo and natural derivatives. \cite{che2022differentiable} proposed a differentiable gradient search for parallel optimization of local SG. \cite{lian2023learnable} proposed a learnable SG to unlock the width limitation of SG. \cite{wang2023adaptive} learned the accurate gradients of loss landscapes adaptively by fusing the learnable relaxation degree into a prototype network with random spike noise. \cite{wang2025potential} proposed a parametric SG strategy that can be iteratively updated. Considering the lack of synergy between these methods in matching SG and MPD, this motivates us to explore their correlations to maximize matching optimization.

\section{Preliminary}
\subsection{Spiking Neural Model}
Based on the essential electrophysiological properties of biological neurons, the leaky integrate-and-fire (LIF) model simulates the electrical activity of neurons in a simplified mathematical form, widely used in SNNs as the basis unit. For computational tractability, \cite{wu2019direct} used the Euler formula to translate LIF into an iterative expression, the membrane potential evolves according to
\begin{align}
    I_i^n(t) &= \sum_{j=1}^{l(n-1)} w_{ij}^n S_j^{n-1}(t), \\
    V_i^n(t) &= \tau V_i^n(t-1)(1 - S_i^n(t-1))+ I_i^n(t), \label{Eq::iterativeLIF_V} \\
    S_i^n(t) &= \Theta(V_i^n(t)) = \begin{cases} 1, & V_i^n(t) \ge V_{th} \\ 0, & otherwise \end{cases} \label{Eq::activation_function}
\end{align}
where the superscript $n$, subscripts $i$ and $t$ denote the $n$-th layer, the $i$-th neuron and the $t$-th timestep, respectively. $l(n-1)$ denotes the number of neurons in the $(n-1)$-th layer. $w_{ij}^n$ denotes the synapse weight from the $j$-th neuron in the $(n-1)$-th layer to the $i$-th neuron in the $n$-th layer. $I$, $V$, and $S$ denote the pre-synaptic input, the membrane potential, and the binary spiking output of neurons, respectively. $\tau$ is the decay factor. $V_{th}$ is the firing threshold.

\subsection{Threshold-dependent Batch Normalization}
There are some drawbacks to directly applying BN techniques in SNNs due to the inherent temporal dynamics of spiking neurons \cite{wu2019direct}. To retain the advantages of BN in the channel dimension and capture the temporal dimension of SNN, \cite{zheng2021going} proposed threshold-dependent BN (tdBN), which normalized the pre-synaptic input $I$ to the distribution of $N(0,(\alpha V_{th})^2)$ instead of $N(0,1)$. Let $I^t_c$ represent the $c$-th channel feature maps of $I(t)$, then $I_c = (I^1_c, I^2_c, . . . , I^T_c)$ will be normalized as
\begin{align}
    \hat{I_c} &= \frac{\alpha V_{th}(I_c - \mathbb{E}[I_c])}{\sqrt{\mathbb{VAR}[I_c] + \epsilon}}, \quad \text{~// normalize} \\
    \label{Eq::tdBN_affine} \Bar{I_c} &= \gamma_c \hat{I_c} + \beta_c, \quad \quad \quad \text{// scale and shift}
\end{align}
where $\mathbb{E}[I_c]$ and $\mathbb{VAR}[I_c]$ denote the expectation and variance of $I_c$, which are computed over the Mini-Batch. $\epsilon$ is a tiny constant. The hyperparameter $\alpha$ is to prevent overfire or underfire, normally set to 1 \cite{zheng2021going}. The pair of learnable parameters $\gamma_c$ and $\beta_c$ are initial to 1 and 0, for scaling and shifting the normalized $\hat{I_c}$.

\subsection{Surrogate Gradient of SNNs}
In Eq. \ref{Eq::activation_function}, the activation function $\Theta(\cdot)$ of SNNs is a Heaviside step function. The derivative of output signals $\frac{\partial S}{\partial V}$ tends to infinity at the firing threshold $V_{th}$ and zeros otherwise, i.e. Dirac function. SG learning allows gradient information to be backpropagated layer-wise along STD, which lays the foundation for developing general SNNs. In this work, we employ the rectangular SG \cite{wu2018spatio}, which is defined as
\begin{equation}
    \frac{\partial S^n_i(t)}{\partial V^n_i(t)} \thickapprox h(V^n_i(t)) = \frac{1}{\kappa}sign(|V^n_i(t)-V_{th}| < \frac{\kappa}{2}),
\end{equation}
where hyperparameter $\kappa$ controls the width of $h(\cdot)$ to ensure it integrates to 1, normally set to 1 \cite{wu2018spatio,wu2019direct}. The gradient $\frac{1}{\kappa}$ is available when the membrane potential $V^n_i(t)$ falls within the interval $[V_{th} - \frac{\kappa}{2}, V_{th} + \frac{\kappa}{2}]$.

\section{Method}
In this section, we introduce the MPD-AGL algorithm in detail and the overall training procedure.

\subsection{Rethinking Pre-synaptic Input Distribution} \label{Sec::rethinking_inputDistribution}
To maintain the representation capacity of the layer, BN layers will normally perform a learnable affine transformation of the normalized pre-activations (Eq. \ref{Eq::tdBN_affine}). As shown in Fig. \ref{Fig::tdBN_param}, the learnable parameters $\gamma_c$ and $\beta_c$ will evolve, and their discrepancy grows more pronounced during training. Thus, the pre-synaptic input normalized by tdBN \cite{zheng2021going} may not satisfy $I \nsim N(0,(V_{th})^2)$, which has not been considered in many previous studies. When SG uses the fixed gradient-available interval, unpredictable shifts in the membrane potential will naturally deviate from the optimal areas for gradient matching, resulting in performance limitations. As the membrane potential is directly computed from the pre-synaptic input, clarifying the distribution of pre-synaptic input helps to analyze the membrane potential shifts. For this respect, we propose \textbf{Theorem 1} to rethink the specific distribution of pre-synaptic input.

\begin{theorem}
With the iterative LIF model and tdBN method, assuming normalized pre-synaptic input $I \sim N(0, (V_{th})^2)$, we have $\Bar{I} \sim N(\Bar{\beta}, (\Bar{\gamma} V_{th})^2)$ after affine transformation, where $\Bar{\beta} = \frac{1}{C} \sum_{c=1}^C \beta_c$ and $\Bar{\gamma} = \frac{1}{C} \sum_{c=1}^C\gamma_c$, $C$ is the channel size of tdBN layer.
\end{theorem}

\begin{proof}
The proof of \textbf{Theorem 1} is presented in \textbf{Supplementary \ref{appendix_proof}}.
\end{proof}

\textbf{Theorem 1} proves that the distribution of pre-synaptic input normalized by tdBN is not only governed by threshold $V_{th}$, but also by the parameters $\gamma_c$ and $\beta_c$ of affine transformations. As the foundation of our work, it provides a new insight into the distribution of pre-synaptic input.

\begin{figure}[htbp]
    \centering
    \includegraphics[width=0.95\linewidth]{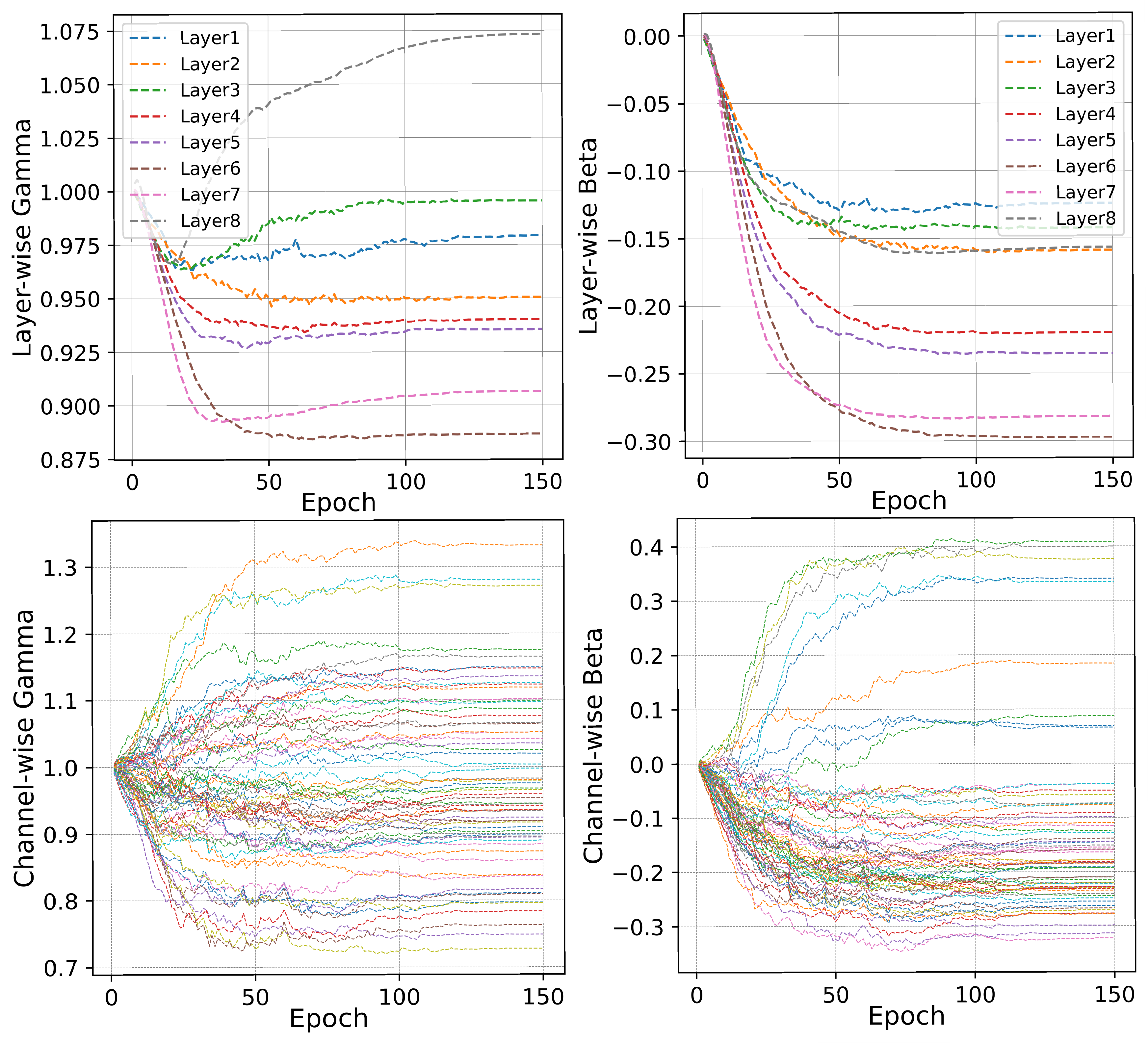}
    \caption{The affine transformation of tdBN in an 8-layer vanilla SNN. Top line is the variation curves of parameters $\gamma$ and $\beta$ for the average of all channels in each layer. Bottom line is the variation curves of parameters $\gamma$ and $\beta$ for all channels in the first layer.}
    \label{Fig::tdBN_param}
\end{figure}

\subsection{Adaptive Gradient Rule}
To optimize SNN learning, we need to synchronously modify the gradient-available interval of SG to better align with the evolving MPD. Thus, it is essential to analyze the detailed dynamics of membrane potentials. \cite{zheng2021going} derived a high degree of similarity between the distribution of pre-synaptic input and membrane potential in neurons. \cite{lian2023learnable} extended this reasoning that for a given pre-synaptic input $I \sim N(0, (V_{th})^2)$, then the distribution of membrane potential is only determined by decay factor $\tau$ and satisfies $V \sim N(0, (1+\tau^2)(V_{th})^2)$. Based on the analysis in Section \ref{Sec::rethinking_inputDistribution}, it is known that the pre-synaptic input will deviate from the desired distribution after tdBN normalization. To this end, we further propose \textbf{Theorem 2} to express the relation between the pre-synaptic input, the decay factor, and the membrane potential at different timesteps.

\begin{theorem} 
Consider an SNN with $T$ timesteps, the pre-synaptic input of neurons injected into the tdBN layer with affine transformation is normalized to satisfy $\Bar{I} \sim N(\Bar{\beta}, (\Bar{\gamma} V_{th})^2)$, we have the membrane potential $\Bar{V} \sim N(\Bar{\beta}, (\Bar{\gamma} V_{th})^2)$ when $t=1$, and $\Bar{V} \sim N((1+\tau)\Bar{\beta}, (1+\tau^2)(\Bar{\gamma} V_{th})^2)$ when $t>1$, where $t \in T$.
\end{theorem}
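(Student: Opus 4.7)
The plan is to propagate the Gaussian distribution established in Theorem~1 through the iterative LIF update in Eq.~\ref{Eq::iterativeLIF_V}, splitting the argument into the base case $t=1$ and the recursive case $t>1$, then invoking closure of Gaussian laws under affine combinations of independent Gaussian variables.

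First I would handle the base case. At $t=1$, the neuron has no prior membrane trace (the state is initialized to zero), so the recursion collapses to $\bar{V}(1) = \bar{I}(1)$. Directly applying Theorem~1 to $\bar{I}(1)$ yields $\bar{V}(1) \sim N(\bar{\beta},(\bar{\gamma}V_{th})^2)$, which matches the claim.

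For $t>1$, I would start from $\bar{V}(t) = \tau \bar{V}(t-1)(1-S(t-1)) + \bar{I}(t)$ and follow the one-step recursion approximation used in Lian et al.\ (2023) and in the derivation of $V\sim N(0,(1+\tau^2)V_{th}^2)$ cited just before the theorem. Concretely, I would treat $\bar{V}(t-1)$ as possessing the pre-firing distribution inherited from the base case, namely $N(\bar{\beta},(\bar{\gamma}V_{th})^2)$, and take $\bar{I}(t)$ as an independent draw from the same Gaussian guaranteed by Theorem~1. Under the decay-only simplification $\bar{V}(t)\approx \tau \bar{V}(t-1) + \bar{I}(t)$, the sum of two independent Gaussians is Gaussian with summed means and variances, giving mean $\tau\bar{\beta}+\bar{\beta} = (1+\tau)\bar{\beta}$ and variance $\tau^{2}(\bar{\gamma}V_{th})^{2}+(\bar{\gamma}V_{th})^{2} = (1+\tau^{2})(\bar{\gamma}V_{th})^{2}$, which is exactly the claimed $N((1+\tau)\bar{\beta},(1+\tau^{2})(\bar{\gamma}V_{th})^{2})$.

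The main obstacle is the nonlinear reset factor $(1-S(t-1))$, which is neither Gaussian nor independent of $\bar{V}(t-1)$ and, strictly speaking, breaks exact Gaussianity and prevents a stationary form uniform in $t$. I would justify discarding it by the same mean-field/one-step argument already adopted in the references the paper relies on, essentially treating the stated distribution as the pre-reset (or equivalently, single-step unfolded) marginal. A secondary subtlety is the cross-timestep independence between $\bar{V}(t-1)$ and $\bar{I}(t)$, which I would take as standard under the tdBN mini-batch statistics; once granted, the rest of the proof is just additivity of independent normals and is essentially mechanical.
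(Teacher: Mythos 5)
Your proposal is correct and follows essentially the same route as the paper's proof: both reduce the $t>1$ case to the one-step unfolding $\Bar{V}(t)\approx\tau\Bar{I}(t-1)+\Bar{I}(t)$ (the paper via the geometric sum from the last firing time, you via dropping the reset factor and assigning $\Bar{V}(t-1)$ the base-case law), then add means and variances of i.i.d.\ Gaussians. Your explicit discussion of the reset term $(1-S(t-1))$ and the cross-timestep independence assumption only makes visible the approximations the paper leaves implicit.
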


\begin{proof}
The proof of \textbf{Theorem 2} is also presented in \textbf{Supplementary \ref{appendix_proof}}.
\end{proof}

\textbf{Theorem 2} describes the dynamic distribution of membrane potential at different timesteps. \cite{lian2023learnable} observed a correlation between SG width $\kappa$ and neuron decay factor $\tau$ and manually designed a proportional function (e.g. $\kappa = f(\tau)$) to describe it. As the decay factor also affects the membrane potential (Eq. \ref{Eq::iterativeLIF_V}), the proportional function can link SG width to MPD. From that view, we can avoid gradient information loss or redundancy by dynamically adjusting the SG width to control the alignment between the gradient-available interval and the evolving MPD. Then, we will concentrate on how to design the correlation function.
 
For a well-formed $f(\cdot)$, the key is to control the proportion of MPD in SG within a reasonable level. The variance reflects the dispersion of a distribution. An increase in variance indicates a more dispersed MPD, so the width needs to be enlarged to increase the proportion of neurons in SG to avoid gradient information loss. Conversely, a decrease in variance requires narrowing the width to reduce the proportion. Thus, a positive correlation arises between SG width and MPD. Here, we empirically set $\kappa$ as 2 times the square root of $\mathbb{VAR}$ when $V_{th}=0.5$ in our work, which ensures the initial width satisfies the standard rectangular SG setting  (i.e. $k\approx 1$). Moreover, as PLIF neurons \cite{fang2021incorporating} can hierarchically learn the decay factor in SNNs, we also employ them to cooperate with the learnable affine transformation to control the evolving MPD together. It does not destroy the correlation function for scaling the SG width in response to MPD, but also enhances the expressiveness of SNNs. Finally, the correlation function $f(\cdot)$ can be formulated as
\begin{align}
    \kappa = f(\tau^n) &= \begin{cases} 2 \times (\Bar{\gamma}^n V_{th}), & t=1 \\
    2 \times \sqrt{1+(\tau^n)^2}(\Bar{\gamma}^n V_{th}), & t>1
    \end{cases} \\
    \tau^n &= sigmoid(\rho^n) = \frac{1}{1+e^{-\rho^n}},
\end{align}
where learnable $\rho^n$ is a layer-wise factor to ensure $\tau^n \in (0, 1)$. $\tau^n$ is initialized to 0.2 for all layers, which is adjusted when $\rho^n$ is updated based on gradients (Eq.~\ref{Eq::gradient_rho}). In this way, SG can accurately capture the membrane potential shift and promptly update the gradient-available interval, effectively optimizing the loss landscape of SNNs.

\subsection{The Overall Training Procedure}
Employing the iterative LIF neurons in SNN has temporal dynamics in the spatial domain, which can well apply the spatio-temporal backpropagation algorithm (STBP) \cite{wu2018spatio} to update synapse weights. In the readout layer, we also only accumulate the membrane potential of output neurons without leakage and firing, as did in recent works \cite{rathi2023diet,deng2022temporal}, which can be described by
\begin{align}
    o^N_i &= \frac{1}{T} \sum^T_{t=1} \sum_{j=1}^{l(N-1)} w^N_{ij}S^{N-1}_j(t),
\end{align}
where $N$ and $T$ denote the number of layers and timesteps, respectively. Then, the gradient of synaptic weights $w^n_{ij}$ and learnable $\rho^n$ can be derived by the chain rule:
\begin{align}
    \frac{\partial L}{\partial w^n_{ij}} &= \sum^T_{t=1} \frac{\partial L}{\partial V^n_i(t)} \frac{\partial V^n_i(t)}{\partial I^n_i(t)} \frac{\partial I^n_i(t)}{\partial w^n_{ij}} \nonumber \\
    &= \sum^T_{t=1} \frac{\partial L}{\partial V^n_i(t)} \sum^{l(n-1)}_{j=1} S^{n-1}_j(t). \\
    \frac{\partial L}{\partial \rho^n} &= \sum^T_{t=1} \frac{\partial L}{\partial V^n_i(t)} \frac{\partial V^n_i(t)}{\partial \tau^n} \frac{\partial \tau^n}{\partial \rho^n} \nonumber \\
    &= \sum^T_{t=1} \frac{\partial L}{\partial V^n_i(t)} \frac{\partial V^n_i(t)}{\partial \tau^n} \tau^n(1-\tau^n). \label{Eq::gradient_rho}
\end{align}

As $V^n_i(t)$ not only contributes to the $S^n_i(t)$ but also governs the $V^n_i(t+1)$, it can be derived by
\begin{align}
    \frac{\partial L}{\partial V^n_i(t)} &= \frac{\partial L}{\partial S^n_i(t)} \frac{\partial S^n_i(t)}{\partial V^n_i(t)} + \frac{\partial L}{\partial V^n_i(t+1)} \frac{\partial V^n_i(t+1)}{\partial V^n_i(t)}, \\
    \frac{\partial L}{\partial S^n_i(t)} &= \sum^{l(n+1)}_{j=1} \frac{\partial L}{\partial V^{n+1}_j(t)} \frac{\partial V^{n+1}_j(t)}{\partial S^n_j(t)} \nonumber \\
    & + \frac{\partial L}{\partial V^n_i(t+1)} \frac{\partial V^n_i(t+1)}{\partial S^n_i(t)}.
\end{align}
Moreover, the pseudocode of the overall training procedure is briefed in \textbf{Algorithm \ref{alg::algorithm1}}.

\begin{algorithm}[!ht]
\caption{The overall training procedure of SNNs with MPD-AGL algorithm in one iteration}
\textbf{Input:} Timestep: $T$; Threshold: $V_{th}$; Initial layer-wise decay: $\tau^n$; input: $S(t), t \in T$; true-label vector: $Y$. \\
\textbf{Output:} updated the weight $w_{ij}^n$ and learnable $\rho^n$of SNNs.
\begin{algorithmic}[1]
    \Statex \textbf{Forward:}
        \For{$n = 1$ to $N$}
        \If{$n < N$}
            \State Compute $I^n = w^n S^{n-1}$ // (1)
            \State $\Bar{I}^n \gets tdBN(I^n)$ // (4) and (5)
            \For{$t = 1$ to $T$}
                \State Compute $\Bar{V}^n(t), S^n(t)$ // (2) and (3)
                \State Compute the width of SG $\kappa$ // (7) and (8)
            \EndFor   
        \Else
            \State $o^N = \frac{1}{T} \sum^T_{t=1}(w^N S^{N-1}(t))$ // (9)
        \EndIf
        \EndFor
        \State $L \gets CrossEntropy(o^N,Y)$
    \Statex \textbf{Backward:}
        \For{$n = 1$ to $N$}
            \For{$t = 1$ to $T$}
                \State $\frac{\partial L}{\partial V^n(t)} \gets Grad(\frac{\partial L}{\partial S^n(t)},\frac{\partial L}{\partial V^n(t+1)})$ // (12)
                \State $\frac{\partial L}{\partial S^n(t)} \gets Grad(\frac{\partial L}{\partial V^{n+1}(t)},\frac{\partial L}{\partial V^n(t+1)})$ // (13)
            \EndFor
        \EndFor
    \State Update the parameters $w_{ij}^n$ and $\rho^n$. // (10) and (11)
\end{algorithmic}
\label{alg::algorithm1}
\end{algorithm}

\section{Experiment}
In this section, we evaluate SNN with MPD-AGL for classification tasks on static CIFAR10/100, Tiny-ImageNet datasets, and the neuromorphic CIFAR10-DVS dataset.
\subsection{Comparisons with Other Methods}
As listed in Table \ref{Tab::classification_accuracy}, we compare the classification accuracy of the proposed method with other advanced methods. For \textbf{CIFAR10} dataset, MPD-AGL with ResNet-19 achieves 96.54\% accuracy in 6 timesteps, significantly outperforming other methods. Notably, at ultra-low latency ($T=2$), our method even slightly improves over all compared methods. For \textbf{CIFAR100} dataset, MPD-AGL still performs well and achieves the best accuracy of 80.49\% in only 6 timesteps. Furthermore, our method outperforms LSG by an overwhelming margin of 2.52\%, 2.87\%, and 3.36\%, respectively. The main reason is that LSG neglects the effect of affine transformation on the pre-synaptic input and membrane potential. As a result, the LSG-designed learnable SG cannot accurately capture the evolving MPD. For \textbf{CIFAR10-DVS} dataset, MPD-AGL with VGGSNN in 10 timesteps can reach an accuracy of 84.10\% by using the TET loss \cite{deng2022temporal}. It even achieves the 82.50\% accuracy w/o it, which is a greater improvement over other methods. For \textbf{Tiny-ImageNet} dataset, MPD-AGL with VGG-13 achieves the accuracy of 58.14\% in 4 timesteps, outperforming ASGL by 1.57\%.

\subsection{Proportion of Gradient Available}
To investigate whether the proposed method can effectively alleviate the gradient vanishing problem, we conducted experiments using ResNet-19 on the CIFAR10 dataset with 2 timesteps. MPD-AGL rethinks the distribution of pre-synaptic input in the tdBN method \cite{zheng2021going} and, inspired by LSG \cite{lian2023learnable}, designs the correlation function to dynamically adjust SG. Therefore, we take STBP-tdBN and LSG as the benchmark algorithms. In Fig. \ref{Fig::gradient_available_rate}(a) and Fig. \ref{Fig::gradient_available_rate}(b), we compare the training loss and test accuracy of these three methods, where MPD-AGL can optimize the training loss to lower smooth values that have better generalization ability. Then we visualize the gradient-available proportion curve for layer 7 (Fig. \ref{Fig::gradient_available_rate}(c)). The fixed width of SG in STBP-tdBN cannot effectively match the evolving MPD, which causes many neurons to fall outside the gradient-available interval and slow weight updates. Compared with STBP-tdBN, LSG can slightly alleviate this situation, but cannot respond to MPD timely. Specifically, LSG takes more epochs to make the proportion of neurons fall into the gradient-available interval to an appropriate level. Obviously, MPD-AGL can quickly capture the shifts in membrane potential and respond promptly. To reveal how our method helps SNNs for gradient propagation, we display the width of SG and the proportion of neurons that fall into the gradient-available interval in each layer. As illustrated in Fig. \ref{Fig::gradient_available_rate}(d-e), the SG width in MPD-AGL is distributed mostly around 1.26, whereas LSG is 1.12. It means that MPD-AGL makes more neurons in deep layers have gradients, alleviating the gradient vanishing. Consequently, active neurons in all layers of MPD-AGL are higher than STBN-tdBN and LSG (Fig. \ref{Fig::gradient_available_rate}(f)).

\begin{table*}[!ht]
    \centering
    \caption{The comparison of classification performance on four benchmark datasets.}
    \resizebox{0.95\linewidth}{!}{
        \begin{tabular}{cccccc}		
        \hline
        \hline
        \textbf{Dataset}        & \textbf{Method}        & \textbf{SG optimization}     & \textbf{Architecture}     & \textbf{Timestep}     & \textbf{Accuarcy(\%)} \\                          
        \hline
        \multirow{8}*{CIFAR10}
            & TAB \cite{jiang2024tab}     & \usym{2717}    & ResNet-19      & 6 / 4 / 2     & 94.81 / 94.76 / 94.73 \\

            & ShortcutBP \cite{guo2024take}       & \usym{2717}   & ResNet-19        & 2     & 95.19 \\

            & STAtten + \cite{lee2025spiking}           & \usym{2717}    & SpikingReformer-6-384     & 4     & 95.26 \\
            
            & TCJA \cite{zhu2024tcja}       & \usym{2717}   & MS-ResNet-18     & 4     & 95.60 \\
            
            & PSG \cite{wang2025potential}          & \usym{2714}   & ResNet-19     & 6 / 4    & 95.00 / 95.12 \\
            
            & LSG \cite{lian2023learnable}          & \usym{2714}   & ResNet-19     & 6 / 4 / 2     & 95.52 / 95.17 / 94.41 \\
            
            & DeepTAGE \cite{liu2025deeptage} & \usym{2714}   & ResNet-18     & 4     & 95.86 \\
            
            & \textbf{Ours}          & \usym{2714}   & \textbf{ResNet-19}     & \textbf{6 / 4 / 2}     & \textbf{96.54 / 96.35 / 96.18} \\
    
        \hline
        \multirow{8}*{CIFAR100}
            & TAB \cite{jiang2024tab}       & \usym{2717}    & ResNet-19     & 6 / 4 / 2     & 76.82 / 76.81 / 76.31 \\
    
            & IM-LIF \cite{lian2024lif}       & \usym{2717}   & ResNet-19        & 6 / 3     & 77.42 / 77.21 \\

            & TCJA \cite{zhu2024tcja}       & \usym{2717}   & MS-ResNet-18     & 4     & 77.72 \\
            
            & STAtten + \cite{lee2025spiking}           & \usym{2717}    & SpikingReformer-6-384     & 4     & 77.90 \\
                 
            & PSG \cite{wang2025potential} & \usym{2714}   & ResNet-19     & 4     & 75.72 \\
            
            & LSG \cite{lian2023learnable}          & \usym{2714}   & ResNet-19     & 6 / 4 / 2     & 77.13 / 76.85 / 76.32 \\
          
            & ASGL \cite{wang2023adaptive}          & \usym{2714}   & ResNet-18     & 4 / 2   & 77.74 / 76.59 \\
            
            & \textbf{Ours}          & \usym{2714}   & \textbf{ResNet-19}     & \textbf{6 / 4 / 2}     & \textbf{80.49 / 79.72 / 78.84} \\
        
        \hline
        \multirow{8}*{CIFAR10-DVS}
            & IM-LIF \cite{lian2024lif}           & \usym{2717}    & VGGSNN      & 10     & 80.50 \\
            
            & IMPD-AGL \cite{Jiang2025tca}  & \usym{2717}    & VGGSNN       & 10     & 77.20 \\
            
            & TET \cite{deng2022temporal}           & \usym{2717}    & VGGSNN       & 10     & 77.33 / 83.17$^*$ \\

            & STAtten + \cite{lee2025spiking}           & \usym{2717}    & SpikingReformer-4-384     & 16     & 80.60 \\
            
            & PSG \cite{wang2025potential}    & \usym{2714}   & ResNet-19     & 7     & 76.00 \\
            
            & LSG \cite{lian2023learnable}  & \usym{2714}   & VGGSNN     & 10     & 77.90\\
            
            & DeepTAGE\cite{liu2025deeptage}     & \usym{2714}   & VGG-11     & 10      & 81.23 \\
    
            & \textbf{Ours}  & \usym{2714}   & \textbf{VGGSNN}     & \textbf{10}     & \textbf{82.50 / 84.10\bf{$^*$}}\\
        \hline
        \multirow{6}*{Tiny-ImageNet}
            & Offline LTL \cite{yang2022training} & \usym{2717}    & VGG-13        & 16     & 55.37 \\

            & S3NN \cite{suetake2023s3nn}           & \usym{2717}    &  ResNet-18     & 1     & 55.49 \\
           
            & IM-LIF \cite{lian2024lif}        & \usym{2717}    &  ResNet-19     & 6 / 3     & 55.37 / 54.82 \\

            & AT \cite{ozdenizci2024adversarially} & \usym{2717}    & VGG-11        & 8     & 57.21 \\
            
            & ASGL \cite{wang2023adaptive}  & \usym{2714}   & VGG-13     & 8 / 4     & 56.81 / 56.57\\
    
            & \textbf{Ours}  & \usym{2714}   & \textbf{VGG-13}     & \textbf{4}     & \textbf{58.14}\\
        \hline
        \hline
        \end{tabular}
    }
    \begin{tablenotes}
        \footnotesize
        \item $*$ denotes using the Adam optimizer with $lr = 1e-3$ and TET loss.
    \end{tablenotes}
    \label{Tab::classification_accuracy}
\end{table*}

\subsection{Effectiveness on SG Functions}
To clarify the effectiveness of our method on other SG functions, we conducted experiments using ResNet-19 on the CIFAR100 dataset with 2 timesteps. Here, we choose three other widely used SG functions, i.e., triangular \cite{bellec2018long}, sigmoid \cite{zenke2021remarkable}, and aTan \cite{fang2021incorporating}. Considering that optimal sharpness varies among different SGs, we scaled $\kappa$ proportionally. As shown in Fig. \ref{Fig::effectiveness_SG}, MPD-AGL achieves 76.43\% accuracy with triangular SG, outperforming STBP-tdBN and LSG by 2.68\% and 1.74\%, respectively. 
\begin{figure}[htbp]
    \centering
    \includegraphics[width=0.85\linewidth]{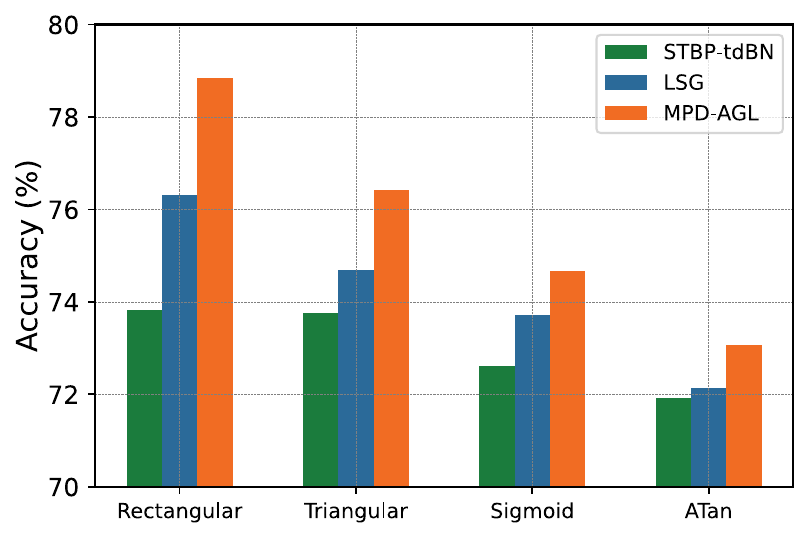}
    \caption{The effectiveness of other SG functions.}
    \label{Fig::effectiveness_SG}
\end{figure}
While MPD-AGL still performs better than STBP-tdBN and LSG on sigmoid and aTan SG functions, it does not perform as well as rectangular and triangular SG functions. This may be due to simply adjusting the sharpness of these asymptotic SG functions based on the evolving MPD, which leads to large oscillations in the gradient information. Instead, linear SG functions have a relatively smooth gradient estimation characteristic and thus exhibit stronger robustness.

\subsection{Energy Efficiency}
To validate the efficiency of SNNs in energy consumption, we conducted experiments using ResNet-19 on the CIFAR10 dataset. The theoretical energy consumption of SNNs can be estimated from synaptic operations (SOPs) \cite{zhou2023spikformer}. Due to the binarized and sparse nature of spikes, SNNs operate low-power AC operations only when neurons fire, and its required SOP varies with spike sparsity. In our model, real-valued images are directly fed into SNNs for encoding, and membrane potentials in the readout layer are used for prediction, so the SOPs contain AC operations and a few MAC operations. For the number of AC operations, we calculate it by $r^n \times T \times N^n_{AC}$, where $r^n$ is the average firing rate of $n$-th layer, $T$ is the timestep, and $N^n_{AC}$ is the number of AC operations in $n$-th layer of an iso-architecture ANN. For the number of MAC operations, it equals the $N_{MAC}$ of encoding and readout layers and scales by $T$ \cite{yao2023attention}. \cite{rathi2023diet} measured in 45 nm CMOS technology that an AC operation costs $0.9pJ$ and a MAC operation costs $4.6pJ$.

\begin{figure}[!ht]
    \centering
    \includegraphics[width=0.80\linewidth]{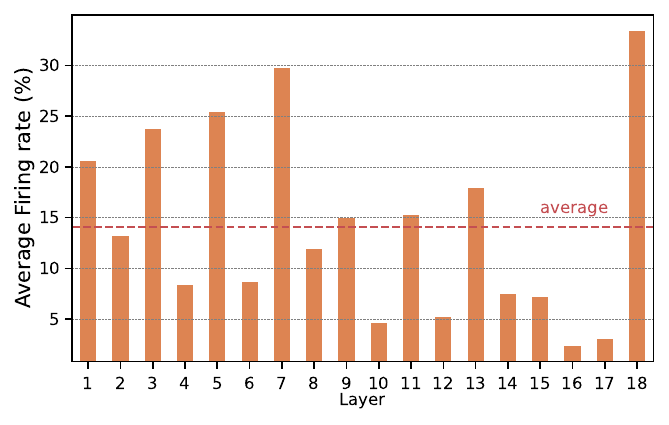}
    \caption{The average firing rate of each layer on CIFAR10 dataset.}
    \label{Fig::average_firing_rate}
\end{figure}

\begin{table}[!ht]
    \centering
    \caption{The energy consumption on the CIFAR10 dataset.}
    \resizebox{0.85\linewidth}{!}{
        \begin{tabular}{ccccc}
            \hline
            \hline
            \textbf{Method}      & \textbf{T}     & \textbf{\#Add.}       & \textbf{\#Multi.}      & \textbf{Energy} \\  
            \hline
            ANN     & -     & 2285.35M     & 2285.35M     & 10.51$mJ$ \\
            \hline
            STBP-tdBN   & 2 & 890.20M    & 7.08M    & 0.83$mJ$ \\
            \hline
            LSG   & 2 & 677.72M    & 7.08M    & 0.64$mJ$ \\
            \hline
            \multirow{3}*{\textbf{MPD-AGL}} 
            & \begin{tabular}[t]{@{}c@{}} 2 \\ 4 \\ 6 \end{tabular} 
            & \begin{tabular}[t]{@{}c@{}} 579.33M \\ 1004.70M \\ 1303.21M \end{tabular}
            & \begin{tabular}[t]{@{}c@{}} 7.08M \\ 14.16M \\ 21.25M \end{tabular}
            & \begin{tabular}[t]{@{}c@{}} 0.55$mJ$ \\ 0.96$mJ$ \\ 1.25$mJ$ \end{tabular} \\
            \hline
            \hline
        \end{tabular}
    }
    \label{Tab::energy_consumption}  
\end{table}

As shown in Fig. \ref{Fig::average_firing_rate}, the average firing rate of each layer in spiking ResNet-19 does not exceed 34\% (14\% on average) when $T=2$. In Table \ref{Tab::energy_consumption}, we estimate the energy consumption during inference at different timesteps, and the proposed MPD-AGL is 19$\times$ lower compared to ANN in 2 timesteps.

\begin{figure*}[htbp]
    \centering
    \includegraphics[width=0.91\linewidth]{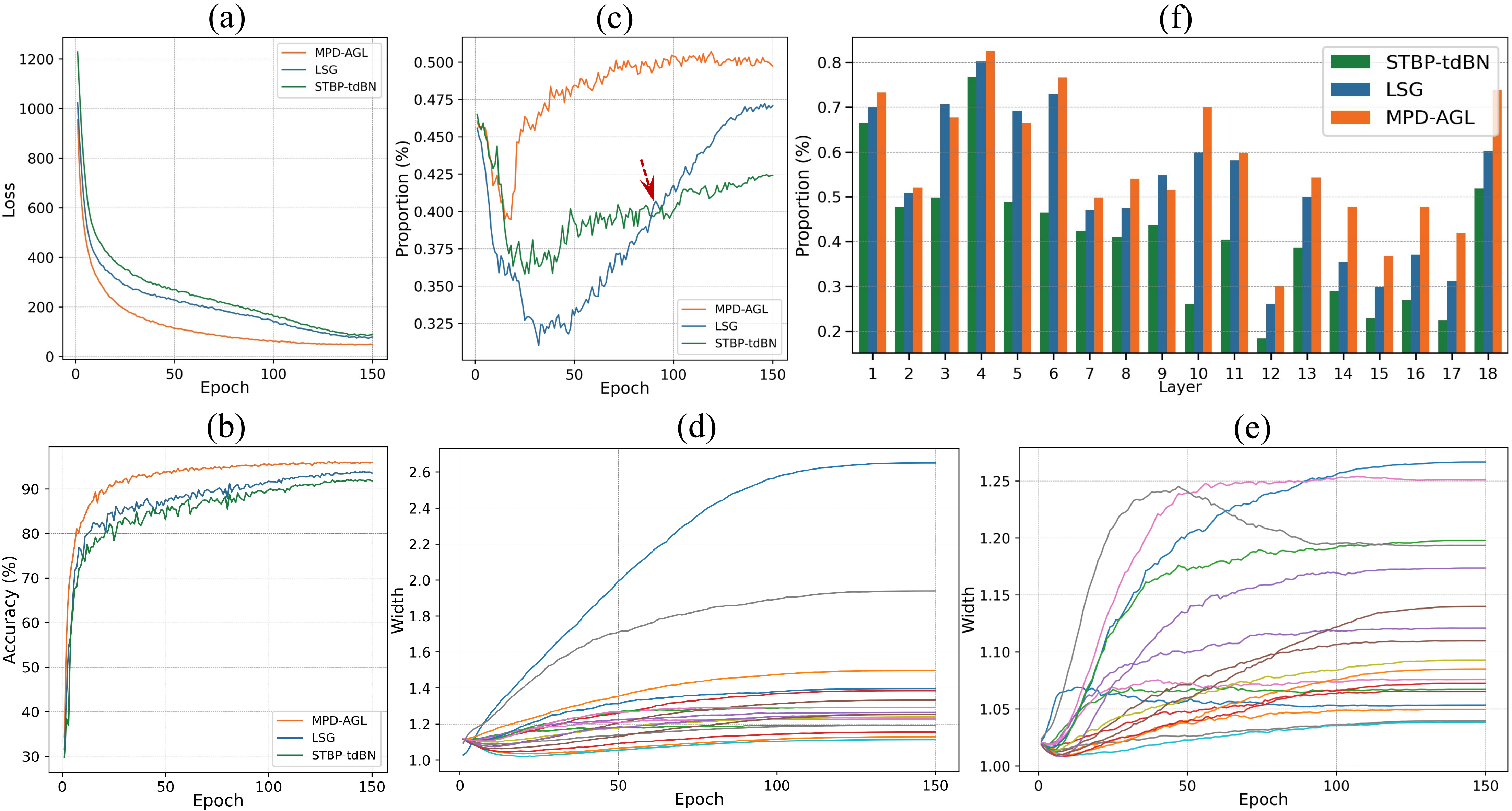}
    \caption{The comparison of different methods on the CIFAR10 dataset. (a) and (b) are the train loss and test accuracy, respectively. (c) and (f) are the proportion of neurons falling into the gradient-available interval in layer 7 and each layer of ResNet-19, respectively. (d) and (e) are the width of SG in each layer of MPD-AGL and LSG, respectively.}
    \label{Fig::gradient_available_rate}
\end{figure*}

\begin{figure}[htbp]
    \centering
    \includegraphics[width=0.95\linewidth]{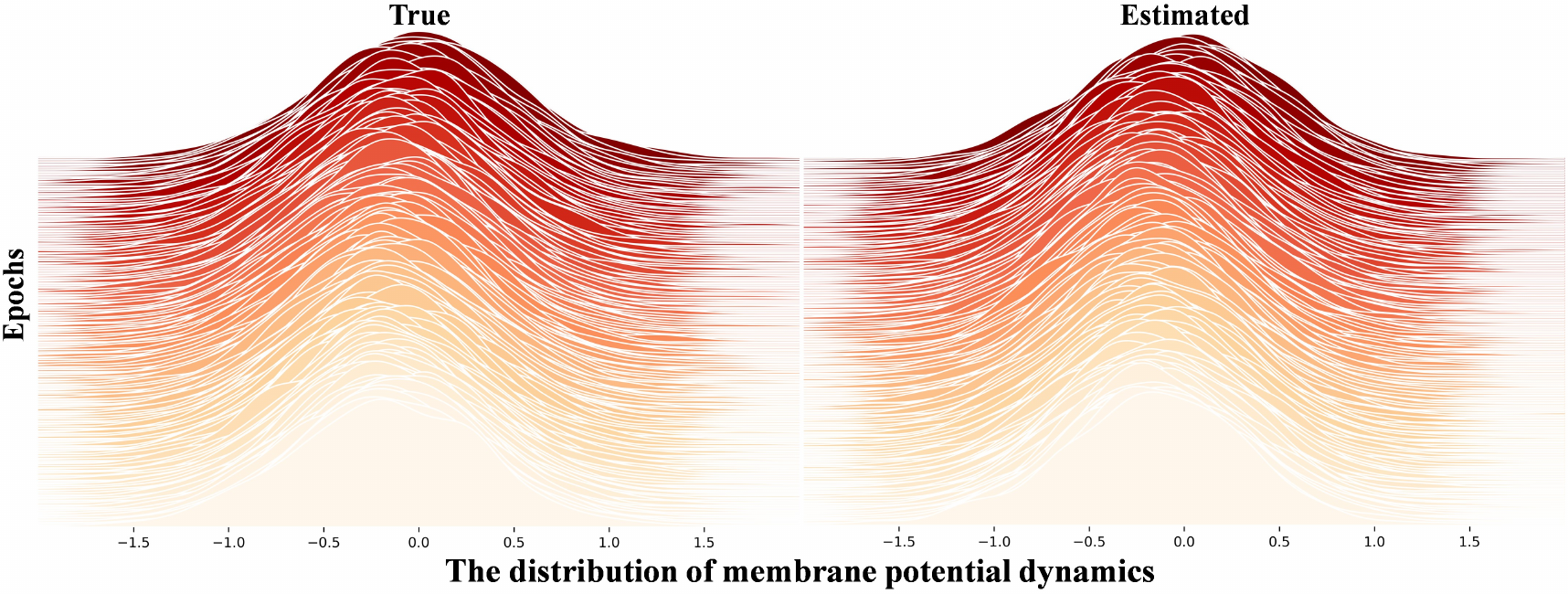}
    \caption{The proof of Theorem1 on CIFAR10-DVS dataset.}
    \label{Fig::ablation_proof_theorem}
\end{figure}

\begin{table}[!ht]
    \centering
    \caption{The ablation study on CIFAR10/100 dataset.}
    \resizebox{0.90\linewidth}{!}{
    \begin{tabular}{lccc}
        \hline
        \hline
        \multicolumn{1}{l}{\multirow{2}{*}{Method}} & \multicolumn{2}{c}{Accuracy ($\%$)} \\ 
        \cline{2-3} & CIFAR10  & CIFAR100 \\
        \hline
        Vanilla    & 92.38    & 73.87  \\
        w/ trainable decay & 93.15 & 74.12  \\
        w/ LSG \cite{lian2023learnable} & 94.41 & 76.32  \\
        w/ AGR & 95.93 & 78.47  \\
        \textbf{w/ MPD-AGL}   & \textbf{96.18} & \textbf{78.84} \\
        \hline
        \hline
    \end{tabular}
    }
    \begin{tablenotes}
        \footnotesize
        \item \textbf{AGR} denotes the proposed adaptive gradient rule
    \end{tablenotes}
    \label{Tab::ablation_accuracy}
\end{table}

\subsection{Ablation Study}
As shown in Fig. \ref{Fig::ablation_proof_theorem}, the true mean and variance of pre-synaptic input are close to the estimated values reasoned from Theorem 1 during training, proving the correctness of Theorem 1. It also indicates that the affine transformation of normalization layers is the reason for the membrane potential shifts, limiting the performance of SG learning. As for Theorem 2, which follows \cite{zheng2021going,lian2023learnable} by employing the factors of affine transformation, please refer to \textbf{Supplementary A} for detailed proofs. To evaluate the effectiveness of MPD-AGL algorithm, we also conducted experiments using ResNet-19 on the CIFAR10/100 datasets with 2 timesteps. In Table \ref{Tab::ablation_accuracy}, applying the proposed adaptive gradient rule (AGR) achieves an accuracy of 95.93\%/78.47\% on the CIFAR10/100 datasets, surpassing the vanilla and LSG methods by 3.55\%/4.60\% and 1.52\%/2.15\%, respectively. When combined with AGR and PLIF neurons, it even reaches 96.18\%/78.84\%, which means that the trainable decay can indeed combine with the adaptive gradient rule to enhance SNN learning.

\section{Conclusion}
In this work, we present a new perspective on understanding the gradient vanishing or mismatch problems in directly training SNNs with SG learning. We identify that these issues primarily arise as the failure of fixed SG and evolving MPD to align, which is caused by the affine transformation in normalization layers. Here, we propose the MPD-AGL algorithm, which adaptively relaxes SG in a temporal-aligned manner to more accurately capture the evolving MPD at different timesteps. Experimental results and theoretical analysis on four datasets demonstrate the effectiveness and superiority of our approach. MPD-AGL unlocks the limitation of SG width and provides more flexible gradient estimation for SNNs. Importantly, it can naturally integrate into existing SNN architectures to further enhance performance without additional inference costs, hopefully promoting the application of SNNs in more complex tasks and wider scenarios.

\bibliographystyle{named}
\bibliography{references}

\clearpage
\appendix
\onecolumn

\begin{center}
\section*{Supplementary Material}
\end{center}

\setcounter{theorem}{0}
\setcounter{equation}{13}
\section{Proofs of Theorems} \label{appendix_proof}
\begin{theorem}
With the iterative LIF model and tdBN method, assuming normalized pre-synaptic input $I \sim N(0, (V_{th})^2)$, we have $\Bar{I} \sim N(\Bar{\beta}, (\Bar{\gamma} V_{th})^2)$ after affine transformation, where $\Bar{\beta} = \frac{1}{C} \sum_{c=1}^C \beta_c$ and $\Bar{\gamma} = \frac{1}{C} \sum_{c=1}^C\gamma_c$, $C$ is the channel size of tdBN layer.
\end{theorem}

\begin{proof}
Perform an affine transformation of pre-synaptic inputs $I$, which gives
\begin{equation}
    \Bar{I} \sim
    \begin{cases}
    x_1 = N(\beta_1, (\gamma_1 V_{th})^2) \\
    x_2 = N(\beta_2, (\gamma_2 V_{th})^2) \\
    \quad \quad \quad \quad \vdots \\
    x_C = N(\beta_C, (\gamma_C V_{th})^2), 
    \end{cases}
\end{equation}
where $x_c \in \mathbb{R}^{N\times T\times H\times W}$ represents the pre-synaptic inputs at channel $c$ with $N$: batch axis, $T$: timestep axis, $(H, W)$: spatial axis. The expectation and variance of $\Bar{I}$, as follows \cite{duan2022temporal}:
\begin{align}
    \mathbb{E}[\Bar{I}] &= \frac{1}{C}(\beta_1 + \beta_2 + \cdots + \beta_C) = \frac{1}{C} \sum_{c=1}^C \beta_c, \\
    \mathbb{VAR}[\Bar{I}] &= \frac{1}{C}(\gamma_1^2 + \gamma_2^2 + \cdots + \gamma_C^2) (V_{th})^2.
\end{align}
According to the Cauchy-Schwarz Inequality, we have
\begin{equation}
    \frac{1}{C}(\gamma_1^2 + \gamma_2^2 + \cdots + \gamma_C^2) \ge (\frac{\gamma_1 + \gamma_2 + \cdots + \gamma_C}{C})^2 = (\frac{1}{C} \sum_{c=1}^C \gamma_c)^2.
\end{equation}
Finally, by initializing the pair of parameters $\gamma$ and $\beta$ with 1 and 0, we can get $\mathbb{E}[\Bar{I}] = \Bar{\beta}$ and $\mathbb{VAR}[\Bar{I}] \approx (\Bar{\gamma} V_{th})^2$, i.e., $\Bar{I} \sim N(\Bar{\beta}, (\Bar{\gamma} V_{th})^2)$, where $\Bar{\beta} = \frac{1}{C} \sum_{c=1}^C \beta_c$ and $\Bar{\gamma} = \frac{1}{C} \sum_{c=1}^C\gamma_c$, $C$ is the channel size of tdBN layer.
\end{proof}

\begin{theorem}
Consider an SNN with $T$ timesteps, the pre-synaptic input of neurons injected into the tdBN layer with affine transformation is normalized to satisfy $\Bar{I} \sim N(\Bar{\beta}, (\Bar{\gamma} V_{th})^2)$, we have the membrane potential $\Bar{V} \sim N(\Bar{\beta}, (\Bar{\gamma} V_{th})^2)$ when $t=1$, and $\Bar{V} \sim N((1+\tau)\Bar{\beta}, (1+\tau^2)(\Bar{\gamma} V_{th})^2)$ when $t>1$, where $t \in T$.
\end{theorem}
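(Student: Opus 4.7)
The plan is to unroll the iterative LIF update from equation (2), $\bar{V}(t) = \tau \bar{V}(t-1)(1 - S(t-1)) + \bar{I}(t)$, one timestep at a time, and apply Theorem 1 to the per-step drive $\bar{I}(t)$. The proof splits naturally into the base case $t=1$ and the carry-over case $t > 1$; the structural tool is that independent Gaussians sum to a Gaussian whose mean is the sum of the means and whose variance is the sum of the variances.

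For $t=1$, taking the neuron's initial potential $\bar{V}(0) = 0$ (equivalently, no previous spike so the reset gate is vacuous), the update collapses to $\bar{V}(1) = \bar{I}(1)$. Theorem 1 then immediately gives $\bar{V}(1) \sim N(\bar{\beta}, (\bar{\gamma} V_{th})^2)$, matching the first part of the claim.

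For $t > 1$, I would model the carry-over $\tau \bar{V}(t-1)(1 - S(t-1))$ as the reset-filtered remnant of the previous-step potential. Following the moment-matching convention used in the references the paper builds on (Zheng et al., Lian et al.), the reset term is taken to inherit the one-step Gaussian shape of $\tau \bar{I}(t-1)$, so this carry-over contributes mean $\tau \bar{\beta}$ and variance $\tau^2 (\bar{\gamma} V_{th})^2$. Assuming it is independent of the fresh input $\bar{I}(t) \sim N(\bar{\beta}, (\bar{\gamma} V_{th})^2)$ (which is driven by a different pre-synaptic batch), summing the two contributions gives mean $(1+\tau)\bar{\beta}$ and variance $(1+\tau^2)(\bar{\gamma} V_{th})^2$, while preserving Gaussianity, which delivers the second claim.

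The main obstacle is dealing cleanly with the nonlinear reset gate $(1 - S(t-1))$: strictly, $S(t-1) = \Theta(V(t-1))$ is a deterministic function of $V(t-1)$ and so is correlated with it, meaning $\tau V(t-1)(1 - S(t-1))$ is neither exactly Gaussian nor exactly independent of $V(t-1)$. The way I would handle this — matching the convention in the cited works — is to treat the reset as effectively re-initialising the carry-over at each step, so that only one step's worth of history contributes to the moments. This is precisely what makes the stated distribution stationary across all $t > 1$ rather than accumulating $\sum_k \tau^{2k}$ variance terms, and it is the one place in the argument that is approximate rather than exact; everything else reduces to the additivity of mean and variance for independent normal summands.
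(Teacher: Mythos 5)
Your proposal matches the paper's proof in all essentials: both unroll the LIF recursion, truncate the retained history to a single step so that $\bar{V}(t)\approx\tau\bar{I}(t-1)+\bar{I}(t)$ for $t>1$, invoke the i.i.d.\ Gaussianity of $\bar{I}$ from Theorem~1, and add means and variances of independent normal summands. The paper makes the same one-step truncation by writing $\bar{V}(t)=\sum_{k=t'}^{t}\tau^{t-k}\bar{I}(k)$ with $t'$ the last firing time and then passing to the two-term approximation, so your explicit acknowledgment that the reset gate is what prevents accumulation of $\sum_k\tau^{2k}$ variance terms is simply a more candid statement of the same approximate step.
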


\begin{proof} 
Assuming the last firing time of $t'$, the membrane potential of iterative LIF model at $t$-th timestep can be expressed by
\begin{equation}
    \Bar{V}(t)=\sum_{k=t'}^t\tau^{t-k}\Bar{I}(k),
\end{equation}
where $\Bar{I}$ denotes the pre-synaptic input after affine transformation. $\tau$ is the decay factor of LIF models. In our SNN model, we set all $\tau$ to 0.2. When $t=1$, neurons are in the resting state, $\Bar{V}$ equals the pre-synaptic input at the initial moment. When $t>1$, $\Bar{V}$ equals the residual membrane potential of the previous moment plus the pre-synaptic input at the current moment. Then, we have

\begin{equation}\label{equation25}
    \Bar{V}(t) \approx \begin{cases}\Bar{I}(t),  &t=1\\ 
    \tau \Bar{I}(t-1)+\Bar{I}(t), &t >1\end{cases}
\end{equation}

According to \textbf{Theorem 1}, $\Bar{I}$ can be assumed as $i.i.d$ sample from $N(\Bar{\beta}, (\Bar{\gamma} V_{th})^2)$ \cite{zheng2021going}. Based on Eq. \ref{equation25}, we can express the mean and variance of $\Bar{V}$ as
\begin{equation}
\begin{aligned}
    \mathbb{E}[\Bar{V}] &\approx \begin{cases}\mathbb{E}[\Bar{I}] = \Bar{\beta},  &t=1\\ (1+\tau)\mathbb{E}[\Bar{I}] = ( 1+\tau) \Bar{\beta}, &t>1\end{cases}
\end{aligned}
\end{equation}
\begin{equation}
\begin{aligned}
    \mathbb{VAR}[\Bar{V}] &\approx \begin{cases}\mathbb{VAR}[\Bar{I}] = (\Bar{\gamma} V_{th})^2, &t=1\\ (1 + \tau^2)\mathbb{VAR}[\Bar{I}] = (1+\tau^2)(\Bar{\gamma} V_{th})^2, &t>1\end{cases}
\end{aligned}
\end{equation}
Finally, we can get the membrane potential $\Bar{V} \sim N(\Bar{\beta}, (\Bar{\gamma} V_{th})^2)$ when $t=1$, and $\Bar{V} \sim N((1+\tau)\Bar{\beta}, (1+\tau^2)(\Bar{\gamma} V_{th})^2)$ when $t>1$, where $t \in T$.
\end{proof}

\section{Experiment}
\subsection{Time Efficiency}
To evaluate the time overhead introduced by our method, we conducted experiments using ResNet19 on the CIFAR10 dataset. Table~\ref{Tab::time_efficiency} compares the running time of one epoch at 2 timesteps and shows that MPD-AGL does not introduce excessive extra overhead. As the inference process does not need gradient estimation, MPD-AGL mainly performs the adaptive SG width adjustment based on the MPD of different timesteps during the training process. In one iteration, the SG width adjustment consists of only $T$ sum-average operations ($\bar{\gamma}$) and $T$ multiplication operations (Eq. 7). For training time, MPD-AGL takes 3s more than STBP-tdBN, and this time overhead is mainly consumed in computing the affine transform parameters of normalization layers and the adaptive update of SG width. MPD-AGL takes 1s more than LSG, since LSG uses the same SG width for all timesteps, whereas our method needs to calculate and update the SG widths for different timesteps. As for inference time, MPD-AGL does not increase the time overhead.

\begin{table}[htbp]  
    \centering
    \caption{The comparison of time efficiency.}
    \renewcommand\arraystretch{1.1}
    \resizebox{0.5\linewidth}{!}{
        \begin{tabular}{cccc}
            \hline
            \hline
            {Methods} & tdBN & LSG & Ours\\  
            \hline
            {Training time} & 1m37s & 1m39s & 1m40s\\
            \hline
            {Inference time} & 8s & 8s & 8s\\
            \hline
            \hline
        \end{tabular}
    }
    \label{Tab::time_efficiency}
\end{table}

\subsection{Robustness}
As the core elements characterize the temporal dynamics of neurons, the firing threshold $V_{th}$ controls the sensitivity of neurons to input signals, and the decay factor $\tau$ affects the duration of neuronal excitation. To investigate the robustness of SNNs in different thresholds and different decay factors, we conducted experiments using ResNet19 on the CIFAR100 dataset with 2 timesteps. In Fig.~\ref{Fig::robustness_LIFparam}, STBP-tdBN and LSG perform poorly with different thresholds, e.g., with the same initial value of decay factors (set to 0.2), LSG decreased by 0.96\% and STBP-tdBN decreased by 2.71\% when the threshold increases from 0.5 to 1.0. The dependence on initial values is reduced by incorporating learnable decay factors in MPD-AGL and LSG. That is not for STBP-tdBN, e.g., when the threshold is 1.0, the accuracy with decay factors of 0.5 and 0.2 is 73.26\% and 71.32\%, respectively, which decreases by 1.94\%. Notably, MPD-AGL exhibits robustness to both thresholds and decay factors. In addition, we also show the performance of these methods in other widely-used architectures (Fig.~\ref{Fig::robustness_network}), and it is demonstrated that MPD-AGL still outperforms LSG and STBP-tdBN.

\begin{figure}[htbp]
    \centering
    \begin{subfigure}[]{0.49\linewidth}
        \includegraphics[width=\linewidth]{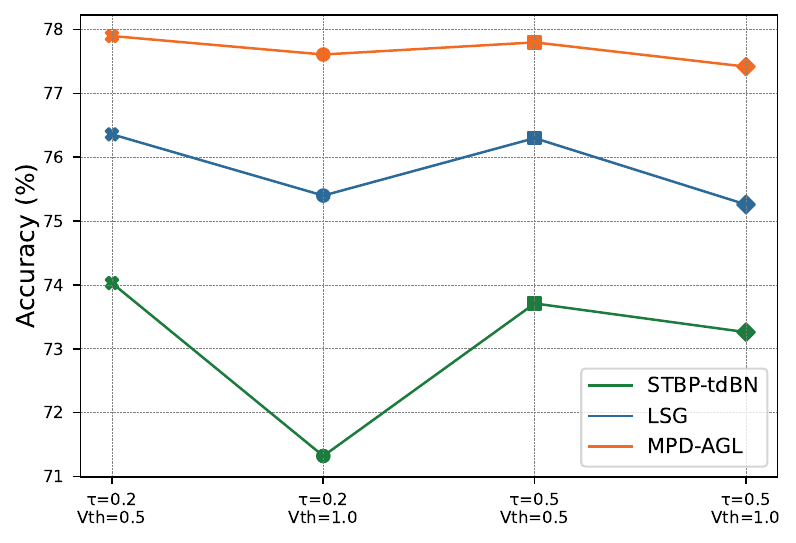}
        \caption{}
        \label{Fig::robustness_LIFparam}
    \end{subfigure}
    \begin{subfigure}[]{0.49\linewidth}
        \includegraphics[width=\linewidth]{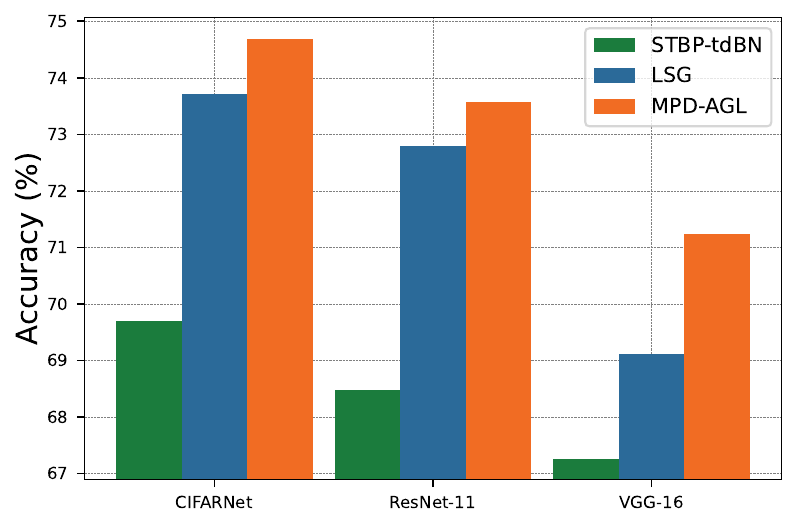}
        \caption{}
        \label{Fig::robustness_network}
    \end{subfigure}
    \caption{(a) The robustness with different neuron coefficients on the CIFAR100 dataset. (b) The robustness with different network structures on the CIFAR100 dataset.}
    \label{Fig::robustness}
\end{figure}

\newpage
\section{Experiments Details}
\subsection{Environment and Hyperparameter Settings}
All experiments are performed on a workstation equipped with Ubuntu 20.04.5 LTS, one AMD Ryzen Threadripper 3960X CPU running at 2.20GHz with 128GB RAM, and one NVIDIA GeForce RTX 3090 GPU with 24GB DRAM. The code is implemented in the Pytorch framework with version 3.9 of Python, and the weights are initialized randomly by the default method of Pytorch 1.12.1.

Table \ref{Tab::hyperparameters} lists the hyperparameters used in our work. SGD optimizer with an initial learning rate $lr = 0.1$, 0.9 momentum, and weight decay $1e-4$ is used in all datasets. All experiments used the CosineAnnealingLR scheduler to adjust $lr$, which will cosine decay to 0 over epochs.

\begin{table}[htbp]  
    \centering
    \caption{Hyperparmeter Settings.}
    \renewcommand\arraystretch{1.1}
    \resizebox{0.6\linewidth}{!}{
        \begin{tabular}{ccccc}
            \hline
            \hline
            Hyperparameters           &CIFAR10 &CIFAR100 &CIFAR10-DVS &Tiny-ImageNet\\  
            \hline
            $V_{th}$                 & 0.5           & 0.5       & 0.5    & 0.5    \\  
            $\tau$                 &  0.2          &  0.2       &  0.2    & 0.2    \\  
            Epoch           &  150           & 150        &  150    &  150    \\  
            Batch Size                &  100           &  100       &   50   &  100    \\
            Optimizer                 &  SGD          &  SGD       &  SGD    &  SGD   \\  
            $lr$             & 0.1           & 0.1        & 0.1   & 0.1    \\  
            \hline
            \hline  
        \end{tabular}
    }
    \label{Tab::hyperparameters}
\end{table}

\subsection{Datasets and Preprocessing}
\textbf{CIFAR-10:} The CIFAR-10 dataset \cite{krizhevsky2009learning} consists of 60,000 RGB static images across 10 classes, each with a 32 $\times$ 32 pixels resolution. These images are split into 50,000 for training and 10,000 for testing. In data preprocessing, we normalized the dataset by subtracting the global mean value of pixel intensity and dividing by the standard variance of RGB channels. Random Horizontal Flip and Crop were also applied to each image. AutoAugment \cite{cubuk2019autoaugment} was used for data augmentation.

\textbf{CIFAR-100:} The CIFAR-100 dataset \cite{krizhevsky2009learning} also contains 60,000 RGB static images with a resolution of 32 $\times$ 32 pixels in 100 classes, which are split into 50,000 training images and 10,000 test images. We adopt the same preprocessing and data augmentation strategy to the CIFAR-100 dataset as the CIFAR-10 dataset.

\textbf{CIFAR10-DVS:} The CIFAR10-DVS dataset \cite{li2017cifar10} is converted from 10,000 CIFAR10 images and is the most challenging mainstream neuromorphic dataset. It consists of 10 classes, each with 1,000 samples and a resolution of 128 $\times$ 128 pixels, but was not split into training and test sets. Following \cite{zheng2021going,samadzadeh2023convolutional}, we also used 90\% of the samples in each class for training and the rest 10\% for testing. In data preprocessing, we reduced the temporal resolution by segmenting the event stream into 10 temporal blocks, accumulating spike events within each block, and resizing the spatial resolution to 48 $\times$ 48 \cite{deng2022temporal,lian2023learnable}. Random Horizontal Flip and Random Roll within 5 pixels were applied for data augmentation \cite{li2022neuromorphic}.

\textbf{Tiny-ImageNet:} The Tiny-ImageNet dataset is the modified subset of the original ImageNet dataset \cite{deng2009imagenet}, which is more challenging than static CIFAR family datasets. It consists of 100,000 RGB static images for training and 10,000 RGB static images for testing across 100 classes, each with a 64 $\times$ 64 pixels resolution. In data preprocessing, we normalized the dataset by subtracting the global mean value of pixel intensity and dividing by the standard variance of RGB channels. Random Horizontal Flip and Crop were also applied to each image. AutoAugment \cite{cubuk2019autoaugment} was used for data augmentation.

\subsection{Network Architectures}
We adopt the widely-used ResNet-19 \cite{zheng2021going}, VGGSNN \cite{deng2022temporal}, and VGG-13 \cite{wang2023adaptive} network structures.The details of the network architecture are listed in Table \ref{Tab::ResNet-19}, Table \ref{Tab::VGGSNN} and Table \ref{Tab::VGG-13}, respectively. $xCy$ represents a convolutional layer with output channels equal to $x $, $kernel~size = y$, $stride~set = 1$, and $padding = 1$. $xFC$ represents a fully-connected layer with output features equal to $x$. $2AP$ represents the average-pooling layer with $kernel~size = 2$ and $stride = 2$. $z$ is the number of classes.

\begin{table}[htbp]
    \centering  
    \caption{ResNet-19 structures.}
    \renewcommand\arraystretch{1.1}
    \resizebox{0.4\linewidth}{!}{
        \begin{tabular}{cc}
        \hline\hline
        layer & ResNet-19 \\
        conv1 & 128C3 \\
        \hline
        block1 & $\left( \begin{array}{c}128C3\\128C3\end{array}\right)\times3$ \\
        \hline
        block2&$\left( \begin{array}{c}256C3\\256C3\end{array}\right)^*\times3$ \\
        \hline
        block3&$\left( \begin{array}{c}512C3\\512C3\end{array}\right)^*\times3$ \\
        \hline
        & average pool, 256-d FC \\
        &  10(11)-d FC \\
        \hline\hline
        \end{tabular}
    }
    \begin{tablenotes}
      \footnotesize
      \item * means the first basic block in the series performs downsampling directly with convolution kernels and a stride of 2.
    \end{tablenotes}
    \label{Tab::ResNet-19}
\end{table}

\begin{table}[htbp]
    \centering  
    \caption{VGGSNN structures.}
    \renewcommand\arraystretch{1.1}
    \resizebox{0.5\linewidth}{!}{
        \begin{tabular}{cc}
        \hline\hline
        \multirow{6}{*}{VGGSNN} & 64C3-LIF \\
            & 128C3-LIF-2AP \\ 
            & 256C3-LIF-256C3-LIF-2AP \\
            & 512C3-LIF-512C3-LIF-2AP \\
            & 512C3-LIF-512C3-LIF-2AP \\
            & -$z$FC \\
        \hline\hline
        \end{tabular}
    }
    \label{Tab::VGGSNN}
\end{table}

\begin{table}[htbp]
    \centering  
    \caption{VGG-13 structures.}
    \renewcommand\arraystretch{1.1}
    \resizebox{0.5\linewidth}{!}{
        \begin{tabular}{cc}
        \hline\hline
        \multirow{6}{*}{VGG-13} & 64C3-LIF-64C3-LIF-2AP \\
            & 128C3-LIF-128C3-LIF-2AP \\ 
            & 256C3-LIF-256C3-LIF-2AP \\
            & 512C3-LIF-512C3-LIF-2AP \\
            & 512C3-LIF-512C3-LIF-2AP \\
            & -4096FC-LIF-4096FC-LIF-$z$FC \\
        \hline\hline
        \end{tabular}
    }
    \label{Tab::VGG-13}
\end{table}

\end{document}